\algnewcommand{\LineComment}[1]{\State \(\triangleright\) #1}
\newcolumntype{C}[1]{>{\centering\let\newline\\\arraybackslash\hspace{0pt}}m{#1}}
\begin{document}

\title{Fast and Simple Explainability for Point Cloud Networks}


\author{Meir Yossef Levi\inst{1} \and
Guy Gilboa\inst{1}}


\institute{Technion - Israel Institute of Technology, Haifa, Israel \\
\email{me.levi@campus.technion.ac.il}\\
\email{guy.gilboa@ee.technion.ac.il}}

\maketitle

\begin{abstract}
We propose a fast and simple explainable AI (XAI) method for point cloud data. It computes pointwise importance with respect to a trained network downstream task. This allows better understanding of the network properties, which is imperative for safety-critical applications. 
In addition to debugging and visualization, our low computational complexity facilitates online feedback to the network at inference. This can be used to reduce uncertainty and to increase robustness. 
In this work, we introduce \emph{Feature Based Interpretability} (FBI), where we compute the features' norm, per point, before the bottleneck. We analyze the use of gradients and post- and pre-bottleneck strategies, showing pre-bottleneck is preferred, in terms of smoothness and ranking. 
We obtain at least three orders of magnitude speedup, compared to current XAI methods, thus, scalable for big point clouds or large-scale architectures. Our approach achieves SOTA results, in terms of classification explainability.
We demonstrate how the proposed measure is helpful in analyzing and characterizing various aspects of 3D learning, such as rotation invariance, robustness to out-of-distribution (OOD) outliers or domain shift and dataset bias. 
\end{abstract}

\section{Introduction}
\label{sec:intro}

Ranking the importance of points within a point cloud is fundamental for gaining deeper understanding and for improving the network's performance in various tasks. Being able to compute importance fast, without resorting to gradient computations, can be of great advantage. It facilitates the use at inference, providing additional capabilities for the network. Current XAI methods for point clouds are slow since they either compute gradients or are based on time-consuming iterative processes.
In this work, we analyze the use of gradients for determining importance in graph neural-networks. We show that the common pooling bottleneck architecture, and specifically Max-Pooling, introduces challenges for gradient-based methods. Importance becomes non-smooth, with either extreme values of flat areas, such that high quality ranking is difficult to obtain. The same phenomenon occurs for post-bottleneck measures, such as critical points \cite{pointnet}. 
We thus opt for a pre-bottleneck computation and observe that the $L^1$ norm
of the features (per-point) is a reliable indicator of influence.
We show qualitatively and quantitatively that high quality ranking of importance for the downstream task is obtained. Several examples demonstrate  how this measure can be used to examine rotation invariance, robustness to outliers and resilience to domain shifts.

\section{Related Work}

\textbf{3D classification.}
Several point cloud classification networks have been proposed in recent years \cite{dgcnn, curvenet, cloud_walker, pointnet, pointnet++, point_mlp, pointbert, gdanet, paconv, pct}. PointNet \cite{pointnet} pioneered the application of learning on raw 3D coordinates and introduced the concept of \textit{Critical Points} as the set of active points after the last pooling layer. This approach motivated subsequent architectures that embrace the 3D Euclidean space. For instance, Dynamic Graph CNN (DGCNN) \cite{dgcnn} introduced EdgeConv, a learnable layer combining local and global information. By iteratively reproducing the graph based on learnt features, the network better learns the semantics of the shape. Geometry-Disentangled Network (GDANet) \cite{gdanet} dynamically decomposes a shape into contour and flat areas for improved understanding.

\textbf{Robust classification.} Robust Point-Cloud Classifier (RPC) \cite{modelnet_c} combines the most robust modules in a typical classification network. PointGuard \cite{pointguard} and PointCert \cite{point_cert} propose a provable scheme for classifying noisy samples, splitting the classification process into distinct random samples and combining them using majority voting. Ensemble of Partial Point-Clouds (EPiC) \cite{epic} advocates using a diverse set of subsamples, encompassing Random, Patches, and Curves.
PointCleanNet \cite{pointcleannet} and PointASNL \cite{point_asnl} suggest learnable approaches for outlier filtering.

\textbf{Self-Supervised methods.} Occlusion Completion (OcCo) \cite{occo} suggests learning semantic correlations in a 3D shape by training on the completion of hidden parts from a certain camera view, and CrossPoint \cite{cross_point} learns semantics through contrastive learning on correspondences between point clouds and images.

\textbf{Rotation-invariant networks.} A fundamental requirement for 3D classification networks is the ability to correctly classify shapes  under rotation. Local-Global-Representation (LGR-net) \cite{lgrnet} encodes rotation-invariant global and local features, building a rotation-invariant network. Our experiments show the network is indeed highly robust to rotations but is still slightly affected by them.

\textbf{Explainable AI (XAI).} The goal of XAI is to expose the rationale of the network, usually by highlighting the regions of the input that most affected the network's output. For image data, numerous approaches have been proposed, such as \cite{gradcam, axiomatic, lrp, lime}. For instance, Gradient-weighted Class Activation Mapping (GradCam) \cite{gradcam} uses the gradients of any target flowing into the final convolution to produce explanations. IntegratedGradients \cite{axiomatic} leverages standard gradient calculation by integrating it along the path from a null input (e.g. black image) to the original image. Local Interpretable Model-agnostic Explanations (Lime) \cite{lime} suggests explaining a complex network with a simpler one through sampling and minimizing an adequate loss function. \emph{Point cloud explainability} is a less explored area of research. Point-Cloud Saliency Maps \cite{saliency_maps} proposes to slide points toward the center of mass to estimate their influence, considering this region as non-influential. Point-Lime \cite{point_lime} is an adaptation of Lime \cite{lime} in the 3D field. Both approaches are slow, requiring an iterative process. Worth mentioning is PointHop \cite{point_hop}, a dedicated, learnable network for explainability. However, in our research, we focus on explaining existing networks, rather than developing new interpretable ones. In the following section we set the notations, present our explainability measure and provide rationale for using the pre-bottleneck part of the network. 

\section{Method}

Consider a point cloud, \(X\), comprising \(N\) points in 3D space: \(\{X_1, \ldots, X_N\}\), \(i \in [N]\), \(X_i \in \mathbb{R}^D\), where in 3D coordinates $D=3$ (in general the input may be of higher dimensions). 
Let \(X_F \in \mathbb{R}^{N \times F} \) denote the per-point feature vector, where \(X_F(i,\cdot)\) is a vector of \(F\) real-valued features of the point \(X_i\). To obtain a global feature vector in a permutation-invariant manner, commonly used techniques include Max-Pooling or its combination with Mean-Pooling. The pooling is performed with respect to the points dimension, so following the pooling bottleneck we have \(X_G = \underset{N}{Pooling}(X_F) \in \mathbb{R}^F\).

\begin{figure*}[ptbh!]
  \centering
   \includegraphics[width = 0.8\linewidth, height =0.168 
   \linewidth]{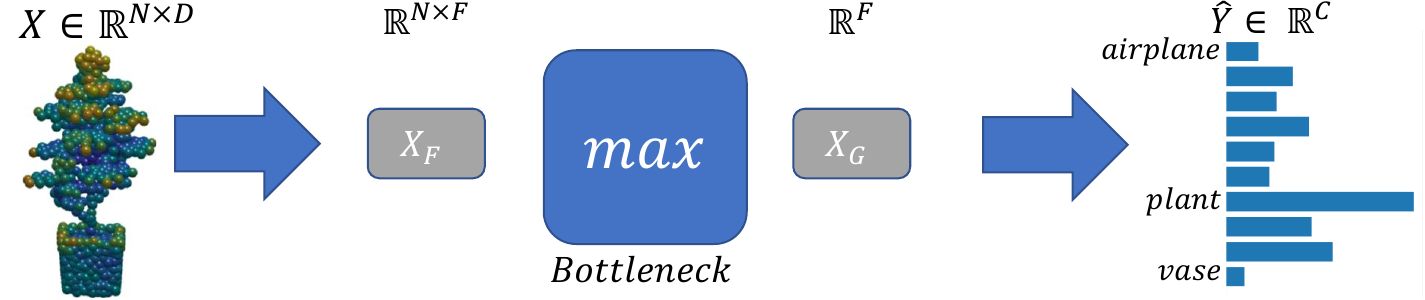}
   \caption{{\bf Typical data flow of a point cloud classification architecture.}}
   \label{fig:scheme}
\end{figure*}

\subsection{Feature-Based Interpretability (FBI)}
Our method is based on the intermediate features of the network probed from the pre-bottleneck stage of the network. A schematic representation of the typical data flow in a point cloud classification architecture is illustrated in \cref{fig:scheme}.
We show there is a strong correlation between the magnitude of the features, the importance of their semantic meaning, and consequently their contribution to the network's downstream task.

\begin{definition}[FBI]
The FBI measure of a point $X_i$ is defined by
\begin{equation}
FBI(i) := \sum_{k=1}^F |X_F(i,k)| .   
\end{equation} 
\end{definition}
We first give the rationale why this measure accounts well for points influence. We then support our claims qualitatively and quantitatively. 

\subsubsection{Probing prior to bottleneck}
The bottlenecks of graph neural networks are often highly aggressive and reduce significant information regarding the input data. For the case of Max-Pooling, let us examine the gradient of the network prediction with respect to a data point.
Let ${\bar{X}_F} \in \mathbb{R}^{N \cdot F}$ be the column stack of the matrix ${X_F} \in \mathbb{R}^{N \times F}$. The derivative of the prediction, $\hat{Y} \in \mathbb{R}^C$, with respect to a point $X_i$, using the chain rule, is:
\begin{equation*}
    \frac{\partial \hat{Y}}{\partial X_i} =
    \underbrace{\overbrace{\underbrace{\frac{\partial \hat{Y}}{\partial X_G}}_{C \times F}}^\text{Post-Bottleneck} \cdot
    \overbrace{\underbrace{\frac{\partial X_G}{\partial \bar{X}_F}}_{F \times (N \cdot F)}}^\text{Bottleneck} \cdot
    \overbrace{\underbrace{\frac{\partial \bar{X}_F}{\partial X_i}}_{(N \cdot F) \times D}}^\text{Pre-Bottleneck}}_{C \times D}
\end{equation*}
We assume Max-Pooling, so  $X_G = \underset{N}{\max}(X_F)$. Recall the derivative of the max function is 1 at the maximal value and zero for all other entries. Thus, the explicit term of 
$\frac{\partial X_G(k)}{\partial X_F} $
is a matrix unit $\mathbb{E}_{j_k,k}$ that has a single nonzero entry with value 1 at $(j_k, k)$, where $j_k \in \{1, \cdots ,N\}$ is the index of the point with maximal value corresponding to feature $k$ (that is, $j_k : X_F(j_k,k) > X_F(j,k), \forall j \neq j_k$). 
Thus, for the column stacked matrix $\bar{X}_F$, let us denote $\frac{\partial X_G(k)}{\partial \bar{X}_F}  \in \mathbb{R}^{N \cdot F}$, which is a transposed column stacked $\mathbb{E}_{j_k,k}$, as $\bm{\delta} _{(k - 1) \cdot N + j_k}$, and in general,
\begin{equation*}
    \frac{\partial X_G}{\partial \bar{X}_F} = 
    \begin{pNiceArray}{w{c}{20mm}}[margin=1pt]
        \bm{\delta}_{j_1} \\
        \bm{\delta} _{N + j_2} \\
        \vdots\\
        \bm{\delta} _{(F - 1) \cdot N + j_F} \\
    \end{pNiceArray}
    \in \mathbb{R}^{F \times N\cdot F} .
\end{equation*}

\begin{proposition}[Existence of zero gradients]
Assume $\frac{\partial X_F(j,\cdot)}{\partial X_i} = 0$, $\forall i, j \in \{1, \ldots, N\}$, $j \neq i$ (i.e., PointNet), and $N > F$. Then, there exist at least $N-F$ points such that $\frac{\partial\hat{Y}}{\partial X_i} = 0$.
\label{lemma:zero_gradient}
\end{proposition}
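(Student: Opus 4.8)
The plan is to show that the entire gradient vanishes for any point not selected by the Max-Pooling, and then to count how many points can possibly be selected. The key observation is that the bottleneck and pre-bottleneck factors combine into the Jacobian $\frac{\partial X_G}{\partial X_i} \in \mathbb{R}^{F \times D}$, and I would analyze this product row by row.

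First I would fix a point $X_i$ and examine the $k$-th row of $\frac{\partial X_G}{\partial \bar{X}_F}$, which the excerpt identifies as the unit row $\bm{\delta}_{(k-1)\cdot N + j_k}$. Multiplying this row into the pre-bottleneck factor $\frac{\partial \bar{X}_F}{\partial X_i}$ simply extracts the single row of that matrix indexed by $(k-1)\cdot N + j_k$, namely $\frac{\partial X_F(j_k,k)}{\partial X_i}$ as a $1\times D$ block. Under the PointNet assumption $\frac{\partial X_F(j,\cdot)}{\partial X_i}=0$ for $j \neq i$, this block is zero unless $j_k = i$. Hence the $k$-th row of $\frac{\partial X_G}{\partial X_i}$ can be nonzero only when point $i$ happens to be the argmax for feature $k$.

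Next I would conclude that $\frac{\partial X_G}{\partial X_i}=0$ for every point $i \notin \{j_1,\ldots,j_F\}$, and since $\frac{\partial \hat{Y}}{\partial X_i}=\frac{\partial \hat{Y}}{\partial X_G}\cdot\frac{\partial X_G}{\partial X_i}$ by the chain rule, the full gradient vanishes for all such $i$. A counting argument then finishes the proof: the set of selected indices $\{j_1,\ldots,j_F\}$ contains at most $F$ distinct points (one argmax per feature, possibly with coincidences), so at least $N-|\{j_1,\ldots,j_F\}| \geq N-F$ points lie outside it and therefore carry zero gradient.

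I do not expect a serious obstacle; the argument is essentially a support/count. The one place demanding care is the column-stack bookkeeping, namely keeping the linear index $(k-1)\cdot N + j_k$ aligned with the correct entry $X_F(j_k,k)$ so that the row-selection step is unambiguous. I would also note explicitly that repeated argmax indices only reduce the number of selected points, so the bound $N-F$ is conservative, and the hypothesis $N>F$ is precisely what guarantees this count is positive.
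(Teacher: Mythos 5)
Your proposal is correct and follows essentially the same argument as the paper's proof: both plug the unit-row structure of $\frac{\partial X_G}{\partial \bar{X}_F}$ into the chain rule, observe that under the PointNet assumption the rows $\frac{\partial X_F(j_k,k)}{\partial X_i}$ vanish whenever $i \notin \{j_1,\ldots,j_F\}$, and conclude by counting that this set has at most $F$ elements. Your explicit remarks on index bookkeeping and on repeated argmax indices only making the bound conservative are minor elaborations of the same reasoning.
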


\begin{proof}
$\frac{\partial X_G}{\partial X_i} = \frac{\partial X_G}{\partial \bar{X}_F} \cdot \frac{\partial \bar{X}_F}{\partial X_i}$.
Plugging-in the explicit term of $\frac{\partial X_G}{\partial \bar{X}_F}$,
\[
\frac{\partial X_G}{\partial X_i} =
\begin{pNiceArray}{w{c}{20mm}}[margin=1pt]
    \bm{\delta}_{j_1} \\
    \bm{\delta}_{N + j_2} \\
    \vdots \\
    \bm{\delta}_{(F - 1) \cdot N + j_F} \\
\end{pNiceArray}
\cdot \frac{\partial \bar{X}_F}{\partial X_i}
\]
we get,
\begin{equation*}
    \frac{\partial X_G}{\partial X_i} =
    \begin{pNiceArray}{w{c}{26mm}}[margin=1pt]
        \frac{\partial \bar{X}_F(j_1)}{\partial X_i} \\
        \frac{\partial \bar{X}_F(N + j_2)}{\partial X_i} \\
        \vdots \\
        \frac{\partial \bar{X}_F((F - 1) \cdot N + j_F)}{\partial X_i} \\
    \end{pNiceArray}
    =
    \begin{pNiceArray}{w{c}{20mm}}[margin=1pt]
        \frac{\partial X_F(j_1, 1)}{\partial X_i} \\
        \frac{\partial X_F(j_2, 2)}{\partial X_i} \\
        \vdots \\
        \frac{\partial X_F(j_F, F)}{\partial X_i} \\
    \end{pNiceArray}
\end{equation*}
Using the first assumption 
yields $\frac{\partial X_G}{\partial X_i} = 0$ 
for any $i \notin \{j_1, j_2, \ldots, j_F\}$,
and thus $\frac{\partial \hat{Y}}{\partial X_i} = 0$ for these indices. Since the set $\{j_1, j_2, \ldots, j_F\}$ contains at most $F$ elements and  $N > F$, there exist at least $N-F$ elements for which $\frac{\partial \hat{Y}}{\partial X_i} = 0$.
\end{proof}

In \cref{fig:intro} we visualize the gradients computed on an airplane sample using PointNet \cite{pointnet} and DGCNN \cite{dgcnn}. Clearly, there exist points that the gradients are equally zero when applied on PointNet \cite{pointnet}, whereas in DGCNN \cite{dgcnn} we observe similar relaxed trend. Points in less discriminative regions, i.e, wing base (outside the critical set), have relatively low gradients.

\subsection{Critical points analysis}
We examine in more detail the case of \textit{Critical Points} (CP), a method commonly employed for probing after pooling. Qualitative comparison between FBI and CP is illustrated in \cref{fig:fbi_critical}.  
In \cite{pointnet} \textit{Critical Points} were defined as the points that remain active after the last Max-Pooling layer. That is,
\begin{equation}
\label{eq:critical}
CP(i) := 
\begin{cases}
    1,              & \text{if } \exists k \text{ s.t. } X_f(i,k)>X_f(j,k) \text{, } \forall j\neq i\\
    0,              & \text{otherwise}
\end{cases}
\end{equation}
The \emph{Critical Set} is defined by,
\[ S_C := \{i\,:\, CP(i) = 1\}. \]

\begin{proposition}[Smoothness]
Assume the K-nearest-neighbors (KNN) graph of $X$ is a connected graph. Let $h$ be a positive constant such that $\max |X_i - X_j| \le h$, $\forall i \in \{1, \cdots ,N\}$, $\forall X_j \in \text{KNN}(X_i)$. Assume $\frac{\partial X_F(j,\cdot)}{\partial X_i} = 0$ (i.e., PointNet), and $N > F$. Then, the influence induced by Critical Points is K-Lipschitz with $K \ge \frac{1}{h}$. 
\label{lemma:smoothness}
\end{proposition}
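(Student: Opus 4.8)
The plan is to read the asserted bound $K \ge \tfrac{1}{h}$ as a \emph{lower} bound on the best Lipschitz constant of the discrete map $i \mapsto CP(i)$ taken over the edges of the KNN graph, thereby certifying that the Critical-Points influence is non-smooth. Concretely, I would interpret $K$-Lipschitzness on the graph as the requirement $|CP(i) - CP(j)| \le K\,|X_i - X_j|$ for every graph edge $(i,j)$, and aim to exhibit a single edge that forces $K$ to be large. Since $CP$ is $\{0,1\}$-valued, the only edges with a nonzero numerator are those realizing a unit \emph{jump} $|CP(i) - CP(j)| = 1$ between a critical and a non-critical endpoint; the short length of such an edge then drives the quotient up to at least $1/h$.

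First I would establish that both the critical set $S_C$ and its complement are non-empty. For each feature $k$ the maximum $\max_i X_F(i,k)$ is attained, so an attaining index lies in $S_C$, giving $S_C \neq \emptyset$; conversely, the strict inequality defining \cref{eq:critical} lets each feature contribute at most one critical point, so $|S_C| \le F$, and since $N > F$ (the same counting underlying \cref{lemma:zero_gradient}) at least one point is non-critical. Hence the vertex set partitions into two non-empty classes.

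Next I would invoke connectivity. Two-coloring the vertices by membership in $S_C$, a connected graph with both color classes non-empty must contain at least one edge whose endpoints receive different colors; I pick such an edge $(i,j)$ with $CP(i) = 1$ and $CP(j) = 0$. For this edge $|CP(i) - CP(j)| = 1$, so any admissible Lipschitz constant obeys $1 \le K\,|X_i - X_j|$, i.e.\ $K \ge 1/|X_i - X_j|$. Because $(i,j)$ is a KNN edge, the hypothesis gives $|X_i - X_j| \le h$, whence $K \ge 1/|X_i - X_j| \ge 1/h$, which is the claim.

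The only genuinely non-routine step is the graph-theoretic one: converting connectivity into the existence of a \emph{boundary} edge between critical and non-critical vertices. Everything else is bookkeeping — counting critical points and chasing the binary jump through the Lipschitz inequality. I would also flag one modeling subtlety for the write-up: the statement concerns the \emph{graph} Lipschitz constant relative to Euclidean edge lengths, and it is precisely the discreteness of $CP$ (a unit jump across a bounded-length edge) that produces the $1/h$ floor, which is the structural reason the smoother, pre-bottleneck FBI measure need not suffer the same bound.
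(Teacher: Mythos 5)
Your proposal is correct and takes essentially the same route as the paper: establish that both $S_C$ and its complement are non-empty (you re-derive the counting directly, where the paper cites its zero-gradient proposition), extract a boundary edge from connectivity (your two-coloring step is precisely the paper's auxiliary lemma, proved in the supplement by propagating equal labels to a contradiction), and conclude from $1 = |CP(X_i) - CP(X_j)| \le K|X_i - X_j| \le Kh$ that $K \ge 1/h$. Your explicit reading of the claim as a \emph{lower} bound on any admissible Lipschitz constant is a fair clarification of the paper's somewhat loosely stated proposition, but it does not change the argument.
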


\begin{proof}
Using \cref{lemma:zero_gradient}, there exist at least $N-F$ points outside the critical set, and since $F>0$, there exist at least a single point in the critical set,  $S_c, \bar{S}_c \notin \emptyset$. Therefore, for a connected graph, $\exists i,j:\{CP(X_i) = 0,\,CP(X_j) = 1\}$ such that $X_j \in \text{KNN}(X_i)$ (See Auxiliary proof in the supp.). The Lipschitz condition for $CP$ is:
\[ 1 = |CP(X_i) - CP(X_j)| \le K|X_i - X_j| \le Kh,\]
and therefore
\[ K \ge \frac{1}{h}.\]
\end{proof}

Critical points and gradients serve as strategies for gathering information from the post-bottleneck phase. Our analysis above shows two key properties:

\begin{enumerate}
    \item For PointNet \cite{pointnet}, we have at least $N-F$ points with \emph{zero gradients} (those outside the critical set). For $N \gg F$ this means most of the points. Regarding DGCNN \cite{dgcnn}, we observe a similar relaxed trend.
    
    \item The smoothness of the importance measure induced by critical points is inversely proportional to the sampling resolution. That is, critical points become less smooth as the point cloud is sampled at a finer resolution.
\end{enumerate}

The attributes of smoothness and uniqueness are highly desirable for an effective influence measure. In a thought experiment, consider extracting the most influential input, perhaps a single point from the tip of a cone. It becomes evident that the shape is preserved, and we would expect points in close proximity to the filtered one to exhibit higher influence than those farther away. By iteratively applying this process, we anticipate spatially close points to exert approximately similar influence, resulting in a smooth influence map. Moreover, after filtering influential points, some initially non-influential ones may gain significance, while others remain uninfluential.
Thus, influence should be meaningful, with semantic ordered ranking, even for zero-gradient points.



By probing features in the pre-bottleneck stage, our method assesses a point's \emph{potential} to contribute to classification rather than its actual contribution, given a certain point sampling. We are thus able to rank points, even those with zero actual contribution, resulting in a smoother influence. Furthermore, it enables ranking points by semantic meaning, regardless of the sampling resolution (See \cref{fig:fbi_critical}). In \cref{fig:intro}, we visualize the gradients in PointNet\cite{pointnet} and DGCNN\cite{dgcnn}, highlighting their non-smooth characteristics and the very low influence of parts of the shape. We demonstrate that FBI remains smooth and ranks even less influential parts, as exemplified in \cref{subfig:intro_pointnet}. This approach remains effective for architectures that incorporate learning using neighbors in the featurizing step and employ MeanPooling along with Max-Pooling, such as DGCNN\cite{dgcnn}, as demonstrated in \cref{subfig:intro_dgcnn}.

\begin{figure}[ptbh!]
    \centering
    \captionsetup[subfigure]{justification=centering}
    \begin{subfigure}[t]{0.49\textwidth} 
\includegraphics[width=1\textwidth]{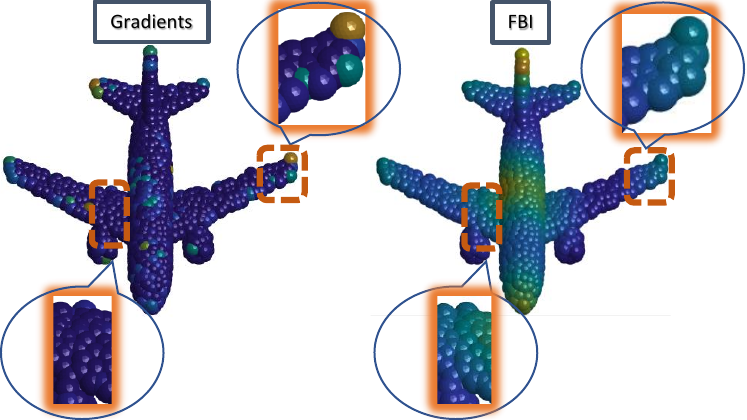}
\caption{{\bf PointNet\cite{pointnet}}}
        \label{subfig:intro_pointnet}
    \end{subfigure}
    \begin{subfigure}[t]{0.49\textwidth} 
\includegraphics[width=1\textwidth]{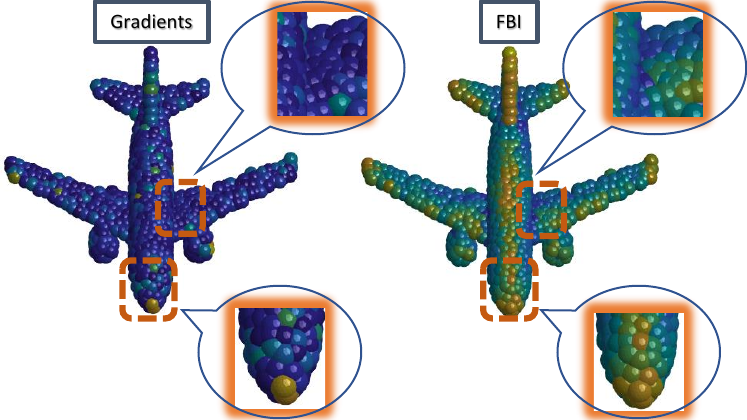}
\caption{{\bf DGCNN\cite{dgcnn}}}    
        \label{subfig:intro_dgcnn}
    \end{subfigure}
    \caption{\textbf{Non-Smooth Gradients.} Gradients in PointNet \cite{pointnet} are zero outside the critical set (e.g., wing's base), and exhibit a non-smooth characteristic (e.g., wing's edge). This trend is similarly observed in DGCNN \cite{dgcnn}. Our approach results in a smoother influence map, predicting the potential influence even for points with zero gradients.}
    \label{fig:intro}
\end{figure}

\begin{figure*}[ptbh!]
  \centering
   \includegraphics[width = 0.9\linewidth, height =0.415 
   \linewidth]{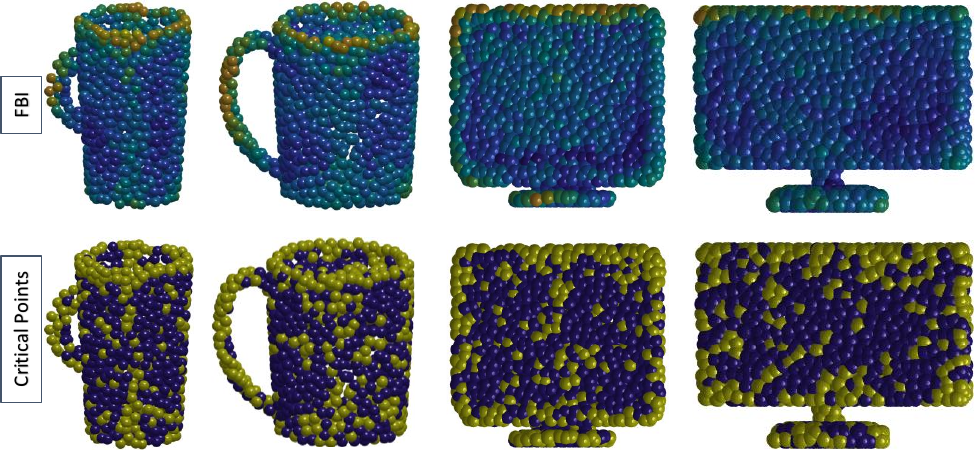}
   \caption{{\bf FBI (ours) vs. Critical Points.} FBI provides rankings based on semantic meaning across the entire shape. Notably, elements like the cup handle or the top of the monitor exhibit high influence, while other parts receive smooth ranking. In contrast, critical points predominantly highlight prominent regions but, in other areas, the selection of points appears nearly random.}
   \label{fig:fbi_critical}
\end{figure*}

\subsection{Method analysis}

\subsubsection{Performance.} Comparisons conducted in \cref{table:auc} show that our method outperform others in most of the networks, with extreme improvement in PointNet\cite{pointnet} on perturbation test. 
 In this test, points are systematically removed (ranging from 10\% to 90\%), starting with the most influential ones. The accuracy is averaged over all 2468 instances in the ModelNet40 set, and the overall test performance is summarized using the area-under-the-curve (AUC). The observed suboptimal performance of gradients and critical points \cite{pointnet} may be attributed to uniformly zero gradients, as when the entire critical set is filtered, non-critical points are randomly perturbed.

\begin{table*}
\begin{center}
  \begin{tabular}{p{3.5cm} || C{2cm} C{2cm} C{2cm} C{2cm}}
  
    \hline
    \multirow{1}{*}{Method} &
      \multicolumn{1}{c}{DGCNN\cite{dgcnn}}&
      \multicolumn{1}{c}{RPC\cite{modelnet_c}}&
      \multicolumn{1}{c}{PointNet\cite{pointnet}}&
      \multicolumn{1}{c}{GDANet\cite{gdanet}}\\
    \hline
    \multirow{1}{*}{Random Sampling} 
    & 55.60 & 66.12 & 68.65 & 59.43\\
    \hdashline
    \multirow{1}{*}{Lime(C=128)\cite{point_lime}} 
    & \textbf{34.80} & \underline{47.22} & \underline{50.68} & \underline{43.52} \\
    \multirow{1}{*}{Lime(C=1024)\cite{point_lime}} 
    & 52.97 & 62.22 & 63.67 & 56.02 \\
    
    \multirow{1}{*}{Gradients} 
    & 50.64 & 59.71 & 61.95 & 54.43\\
    \multirow{1}{*}{IntegratedGradients \cite{axiomatic}} 
    & 41.38 & 56.63 & 59.51 & 48.65\\
    
    \multirow{1}{*}{Critical Points \cite{pointnet}} 
    & 51.66 & 61.93 & 64.08 & 57.85\\
    \hline
    \multirow{1}{*}{FBI \textbf{(Ours)}} 
    & \underline{41.05} & \textbf{43.57} & \textbf{39.20} & \textbf{40.00}\\
    \hline
  \end{tabular}
\end{center}
\caption{\textbf{Perturbation Test (AUC) on ModelNet40\cite{modelnet40}.} Our FBI method outperforms all other baselines on 3 out of 4 examined networks, with the advantage of being 7 orders of magnitude faster than Lime\cite{lime} as the only candidate with competitive results.
}
\label{table:auc}
\end{table*}

\begin{table} [tbh]
\begin{center}
  \begin{tabular}{p{3.5cm} || C{2cm} C{2cm} C{2cm} C{2cm}}
  
    \hline
    \multirow{2}{*}{Method} &
      \multicolumn{4}{c}{Time[ms]}\\
      \cmidrule(l){2-5} &
       PointNet\cite{pointnet} & GDANet\cite{gdanet} & DGCNN\cite{dgcnn} & RPC\cite{modelnet_c}\\
    \hline
    Lime(C=128) \cite{point_lime}& 50,000 & 50,000 & 50,000 & 50,000\\
    Lime(C=1024) \cite{point_lime}& 500 & 750 & 600 & 560\\
    Gradients  & 6 & 15 & 8 & 15\\
    IntegratedGradients \cite{axiomatic}& 40 & 80 & 50 & 65\\
    Critical Points \cite{pointnet} & 0.008 & 0.008 & 0.008 & 0.008\\
    \hline
    FBI \textbf{(ours)} & \textbf{0.003} & \textbf{0.003} & \textbf{0.003} & \textbf{0.003}\\
    \hline
  \end{tabular}
\end{center}
\caption{\textbf{Timing.} Our approach obtains at least three orders of magnitude speedup, compared to modern XAI methods. Critical points is also very fast but much less accurate (\cref{table:auc}). Timing of our method is approximately constant, regardless of the network's architecture, since no derivation across the layers  is performed. The method is thus scalable in terms of network parameters or size of point cloud.}
\label{table:timing}
\end{table}

\subsubsection{Timing.}

FBI measure involve straightforward calculations on features, eliminating the need for time-consuming derivations across the entire network, as seen in Gradients and IntegratedGradients\cite{axiomatic}, or any iterative processes involved in Lime \cite{lime}.
Consequently, our simple method is well-suited for time-demanding processes, particularly when considering the application of explainable methods during inference. Given its purely computational nature, our method exhibits scalability, making it particularly advantageous for larger networks.

\begin{figure*}[ptbh!]
  \centering
   \includegraphics[width = 0.7\linewidth, height =0.385 
   \linewidth]{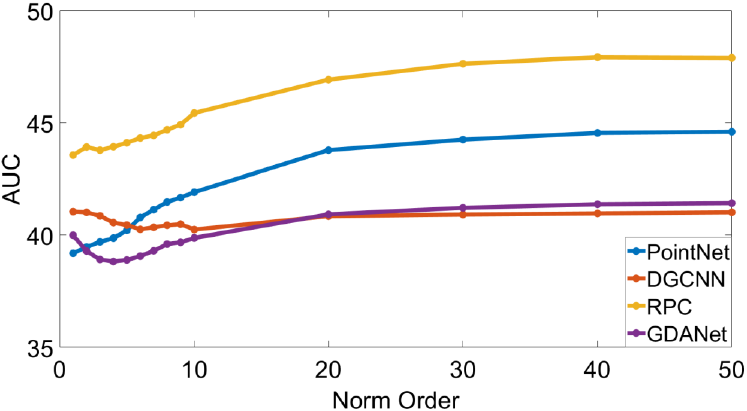}
   \caption{{\bf Norm order comparison.} AUC as a function of the order $p$ of the $L^p$ norm, where the order $p$ measure is $FBI_p(i):=\|X_F(i,\cdot)\|_{L^p}$. Optimal results for RPC \cite{modelnet_c} and PointNet \cite{pointnet} are achieved with the $L^1$ norm, while DGCNN \cite{dgcnn} and GDANet \cite{gdanet} show improved performance with a higher norm order. To maintain simplicity, we adopt the $L^1$ norm in our method.}
   \label{fig:norm_ablation}
\end{figure*}

\subsubsection{Ablation study}
Our proposed method is simple and parameter-free. For completeness, we investigate the impact of different $L^p$ norms to compute FBI. In \cref{fig:norm_ablation}, we assess the AUC on a grid of norm orders $p$ for PointNet\cite{pointnet}, RPC \cite{modelnet_c}, GDANet \cite{gdanet}, and DGCNN\cite{dgcnn}.

\section{Analysis and Insights}

Beyond its conventional role in debugging, explainable AI proves to be a powerful tool for illuminating fundamental aspects of 3D analysis. In this section, we employ FBI to gain a comprehensive understanding of key facets. Specifically, we begin by presenting and comparing the influence maps of rotation-invariant networks against their classic counterparts. Providing insights into the intricate decision-making processes of the network when confronted with out-of-distribution scenarios, we then discuss distinctions between self-supervised and supervised method.


\begin{figure*}[ptbh!]
  \centering
   \includegraphics[width = 1\linewidth, height =0.54 
   \linewidth]{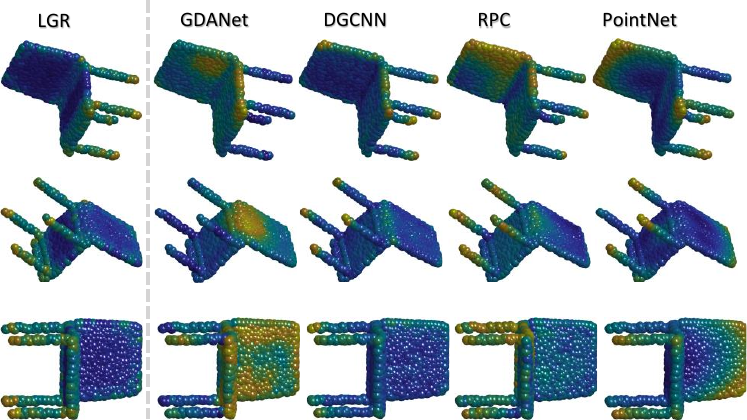}
    \caption{\textbf{Illustration of rotation invariance.} A chair at different rotations, color-coded by our FBI measure, highlighting the influence distribution across various orientations. The Local-Global-Representation (LGR) network, designed for rotation invariance, exhibits more consistent influence distribution compared to traditional networks.}
   \label{fig:rotation_invariant}
\end{figure*}

\subsection{3D rotation invariance}
A crucial aspect of 3D classification involves accounting for object rotations to ensure that a rotated object is consistently classified as the same object. This fundamental characteristic has spurred the development of rotation-invariant classification networks by researchers. One notable example is the Local-Global-Representation (LGR) network \cite{lgrnet}, designed to seamlessly integrate local geometry and global topology in a rotation-invariant manner.
In \cref{fig:rotation_invariant}, we present an example of a chair with different rotations. As can be expected, the influence distributed on the rotated shapes appears more consistent across various rotations in LGR \cite{lgrnet}, highlighting its effectiveness as a rotation-invariant network. 
In contrast, traditional networks are notably affected by the rotation of the shape, with influence distributed differently over the shape for each rotation.

\subsubsection{Quantitative analysis}
To rigorously validate our observations, we conduct a quantitative analysis to assess the impact of rotations on various networks. For a rotation-invariant network, we anticipate consistent influence for each point irrespective of the rotation of the shape.
To quantify the influence deviation of rotated shapes, we compute a per-point deviation measure represented as $\delta = \frac{||FBI^{rotated} - FBI^{clean}||}{FBI^{clean}}$. Here, $FBI^{rotated} \in \mathbb{R}^N$ is computed on the rotated shape, and $FBI^{clean} \in \mathbb{R}^N$ is the influence measure of the unrotated shape. This deviation measure is averaged across all points of the shape, all shapes in the dataset, and across all severities of rotations. It effectively gauges the extent of feature magnitude deviation induced by rotation compared to the clean feature magnitude.
In \cref{table:rotation_invariance_quantitatively}, we present a summary of the correlation between $\delta$ and accuracy under rotations. A network that tends to maintain consistent influence for each point during rotations is better equipped to handle rotational variations. It is noteworthy that even LGR-Net\cite{lgrnet}, designed for rotation invariance, does not perfectly preserve influence under rotations.
\begin{table} [ptbh!]
\begin{center}
  \begin{tabular}{p{2cm} || C{1.5cm} C{2.0cm}}
  
    \hline
    Model & $\delta$[\%] & Accuracy [\%]\\
    \hline
    LGR \cite{lgrnet} & 1\% & 91.1\%\\
    GDANet \cite{gdanet}  & 52\% & 78.8\%\\
    DGCNN \cite{dgcnn} & 174\% & 78.5\%\\
    RPC \cite{modelnet_c}  & 215\% & 76.8\%\\
    PointNet \cite{pointnet}  & 2873\% & 59.1\%\\
    \hline
  \end{tabular}
\end{center}
\caption{\textbf{Quantitative analysis of rotation invariance.} Feature deviation percentage ($\delta$) and classification accuracy for various 3D classification models under rotations. Lower $\delta$ values indicate consistent feature influence during rotations, this is well correlated to higher accuracy.}

\label{table:rotation_invariance_quantitatively}
\end{table}

\subsection{Robustness to out-of-distribution (OOD)}

\begin{figure*}[ptbh!]
  \centering
  \includegraphics[width=1\linewidth, height=0.4285\linewidth]{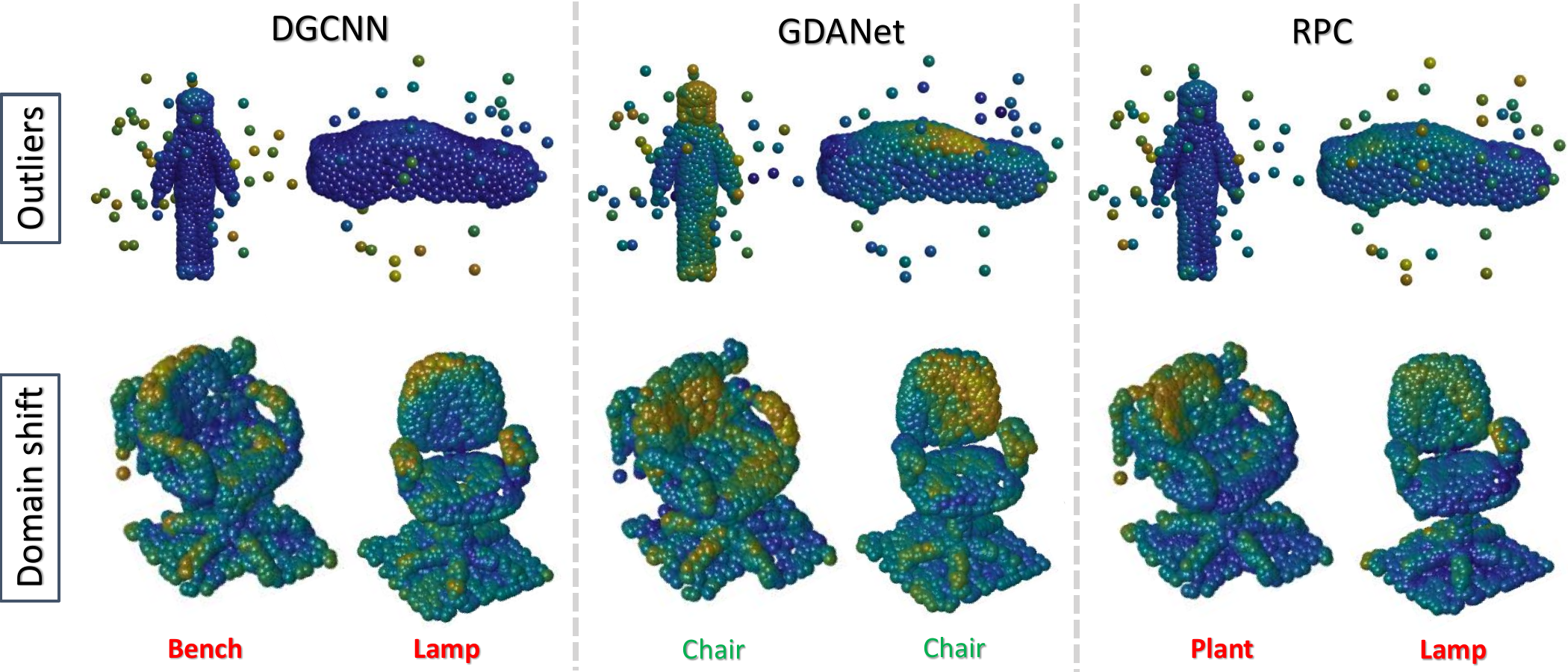}
  \caption{{\bf OOD robustness analysis.} Color-coded by FBI. Networks trained on ModelNet40 \cite{modelnet40}, and evaluated either on corrupted ModelNet-C \cite{modelnet_c} or real-world ScanObjectNN \cite{scanobjectnn}. Visualization demonstrate that the influence corresponds to the ability of networks to handle OOD. Architectures (such as GDANet) influenced by semantic regions, even in the presence of outliers or background, are more OOD robust.}
  \label{fig:ood}
\end{figure*}

\subsubsection{Outliers}
In \cite{agnostophobia}, the authors argued and visualized that the feature magnitudes of unknown samples are lower than those of known ones in image classification. In this context, we investigate the same characteristic on point clouds and surprisingly observe the opposite trend. Outlier points exhibit higher feature magnitudes than benign points.

This observation holds across multiple architectures trained on ModelNet40\cite{modelnet40} specifically, DGCNN\cite{dgcnn}, GDANet\cite{gdanet}, and RPC\cite{modelnet_c}. To visualize this phenomenon, we use FBI to examine the influence maps of these networks on corrupted samples from ModelNet-C\cite{modelnet_c}, focusing on Add-Global corruption. The networks were trained on uncorrupted samples, without outliers, and evaluated on corrupted ones. Therefore, outliers introduced in ModelNet-C, are categorized as OOD, since they were not introduced during training.
The visualization in \cref{fig:ood} illustrates that in 3D classification, outliers tend to be highly influential. Consequently, the magnitude of OOD features is higher than that of in-distribution features.


To quantitatively validate this assertion, we compute the attention gained by outliers relative to the total influence distributed over the entire shape. Let $i$ denote a sample index,  ${O_i}$ be the outlier points set and $S_i$ be the set of all points in the shape i.e, $O_i \subset S_i$. We define $R_i$ as the fractional influence that outliers gained by:
\[
R_i = \frac{\Sigma_{j \in O_i} FBI(j)}{\Sigma_{j \in S_i} FBI(j)}.
\]
We average $R_i$ over the entire add-global set across all degrees of severity. In \cref{fig:ood_graph}, we demonstrate that as the network tends to allocate more influence to the outliers, the overall performance drops.

\begin{figure}%
  \centering
  \subfloat[][]{%
      \includegraphics[width=1\textwidth]{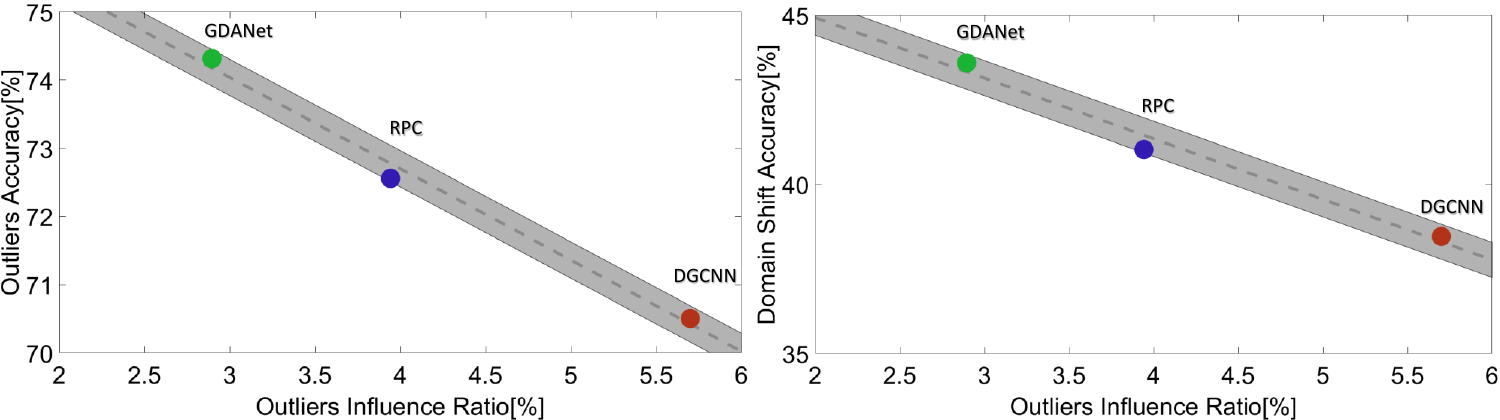}}%
  \qquad
  \subfloat[][]{\begin{tabular}{p{2cm} || C{1cm} C{2cm} C{2cm}}
  
    \hline
    Model & R  & Outliers Accuracy & Domain Shift Accuracy \\
    \hline
    DGCNN \cite{dgcnn} & 5.70\% & 70.50\% & 38.46\%\\
    RPC \cite{modelnet_c}  & 3.94\% & 72.55\% & 41.03\%\\
    GDANet \cite{gdanet}  & 2.89\% & 74.31\% & 43.59\%\\
    \hline
  \end{tabular}}
  \caption{\textbf{Correlation between R and OOD Accuracy.} A linear dependency is observed between the fraction of influence (R) allocated to outliers and OOD robustness. Networks are trained on ModelNet40 \cite{modelnet40} and evaluated on ModelNet-C\cite{modelnet_c} (outliers) as well as on ScanObjectNN\cite{scanobjectnn} (representing domain shift to real-world).}%
  \label{fig:ood_graph}%
\end{figure}

\pagebreak
\subsubsection{Domain shift}

Another crucial aspect of OOD evaluation involves training on one domain and assessing performance on another. In this scenario, we focus on networks trained on the synthetic dataset, ModelNet40 \cite{modelnet40}, and evaluate their performance on the more challenging ScanObjectNN\cite{scanobjectnn} dataset. The latter encompasses real-world point clouds often affected by challenging conditions, including outliers and complex backgrounds.
We investigate how measuring the fractional influence can indicate efficiency for domain shift scenarios.  

Similarly to the previously observed trend, \cref{fig:ood} (bottom) illustrates the ability of GDANet to grasp relevant shape details in the presence of real-world challenges, making it well-suited for domain shift tasks, compared to other examined networks. To quantitatively evaluate this insight, we assess accuracy on the Chair class, a category shared between ModelNet40 and ScanObjectNN. The results in \cref{fig:ood_graph} support the consistent trend, where the resilience of GDANet \cite{gdanet} to outliers aligns with its effectiveness in handling domain shifts, outperforming RPC \cite{modelnet_c} and DGCNN \cite{dgcnn}.

\subsection{Supervised and self-supervised learning analysis}
In image classification, prior studies have illustrated distinctions in influence maps derived from both supervised and self-supervised paradigms, even when applied to identical architectural configurations. In the case of the Vision Transformer (VIT) \cite{vit}, trained in a supervised manner on Imagenet \cite{imagenet}, the acquired influence maps manifest a tendency to attend to features not directly associated with the predicted object. For instance, an image featuring a cow surrounded by grass generates an influence map attending both the cow and the surrounding grass. This problematic phenomenon, denoted as \textit{shortcuts} or \textit{spurious cues} \cite{shortcuts, cues}, is attributed to dataset bias. The training dataset predominantly features instances of cows against a grassy backdrop, leading the classifier to erroneously associate the presence of the cow with the concurrent existence of a grassy background. 
In contrast, influence maps derived from Dino-VIT \cite{dino_vit}, a Vision Transformer architecture trained under a self-supervised regime, exhibit a greater concentration on the predicted object.
To the best of our knowledge, our study represents a pioneering effort in analyzing the influence maps of 3D classification networks within this specific context, illuminating analogous insights.

\begin{figure}[ptbh!]
  \centering
   \includegraphics[width=1\linewidth, height=0.57\linewidth]{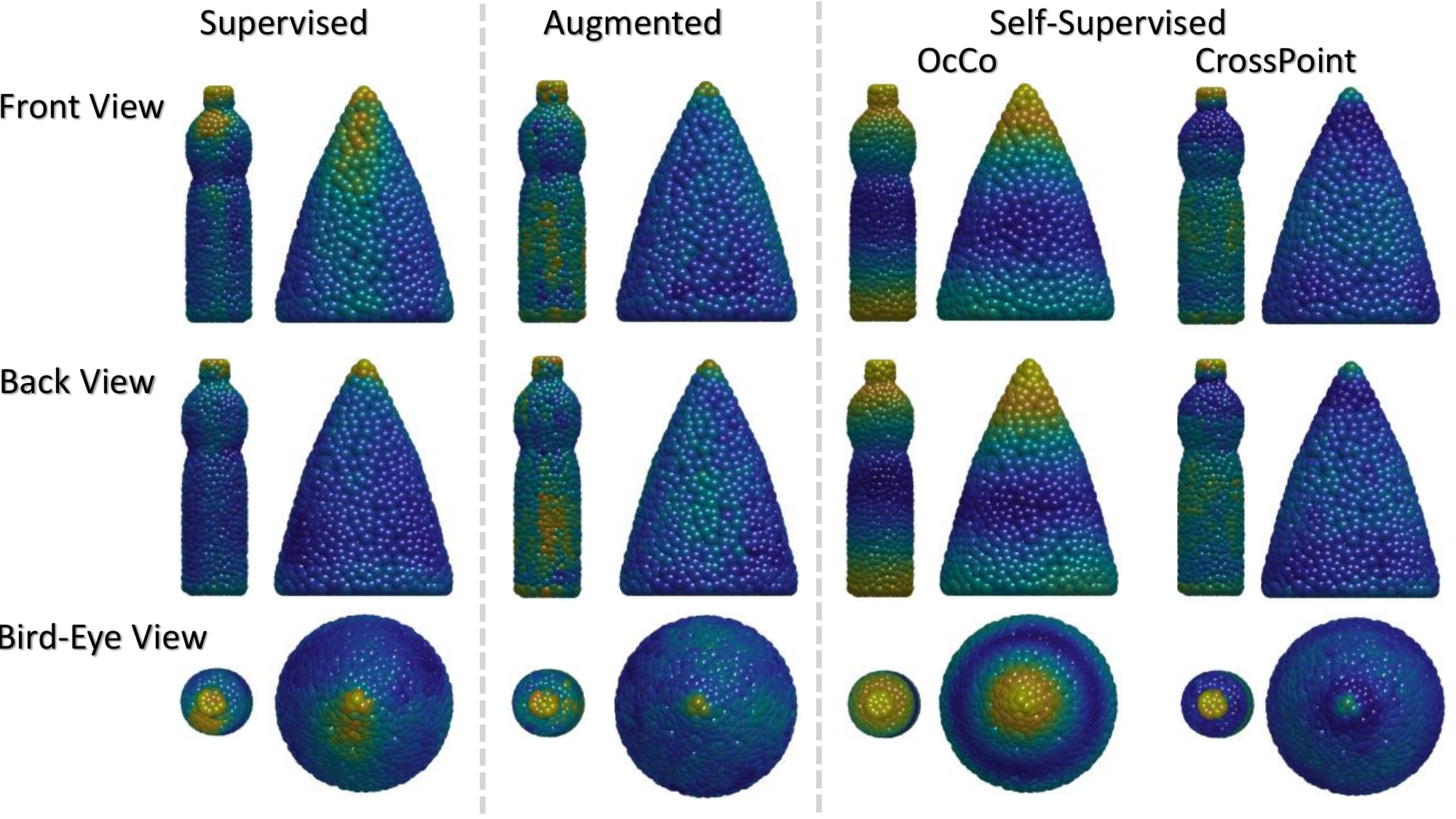}
   \caption{{\bf Influence on supervised and self-supervised methods.} All methods utilize DGCNN \cite{dgcnn} as a backbone. The supervised approach exhibits asymmetric influence, emphasizing the frontal aspect despite the symmetry of the shape (a bottle and a cone). In both OcCo\cite{occo} and CrossPoint\cite{cross_point}, the influence is symmetric, suggesting a potential dataset bias. A WolfMix\cite{pointwolf, rsmix} augmented version slightly alleviates the asymmetry but heavily depends on the augmentation procedure.}
   \label{fig:supervised_self_supervised_modelnet40}
\end{figure}

\subsubsection{Dataset bias.}

we explore the influence maps produced by two self-supervised methods:

1) \textit{CrossPoint} \cite{cross_point}: This method learns 3D features by minimizing contrastive loss on image-to-point-cloud correspondences.

2) \textit{Occlusion Completion (OcCo)} \cite{occo}: This approach focuses on reconstructing obscured regions from a camera view.

To ensure a fair comparison, we employ DGCNN \cite{dgcnn} as the backbone framework for all methods.
Unraveling spurious cues in ModelNet40 \cite{modelnet40} presents a challenge as all data points are inherent to the object itself, lacking a distinct background. However, for objects characterized by symmetry, we anticipate a corresponding symmetrical influence. In \cref{fig:supervised_self_supervised_modelnet40}, we compare influence maps for a bottle and a cone featuring z-axis symmetry. Evaluation of the influence map from a DGCNN trained in a supervised fashion reveals a bias toward the frontal region of the object, resulting in an asymmetric influence on a symmetric shape. Interestingly, with OcCo \cite{occo} and CrossPoint \cite{cross_point}, the influence measure exhibit impressive symmetry. The influence, when cultivated through a self-supervised approach, aligns more symmetrically with the inherent symmetry of the object. We further analyze the effect of augmentation \cite{pointwolf, rsmix} on DGCNN. One can observe  symmetry is increased, however, remains of asymmetry are still present, since this approach may depend on the augmentation procedure.

This phenomenon could be attributed to dataset bias. For instance, if the majority of cups in the dataset have handles positioned at the frontal aspect, the network might disproportionately focus on this region in its pursuit of discriminative elements. In a self-supervised setting, where labels are absent, there is a potential reduction in susceptibility to such biases.

\section{Discussion and Conclusion}
In this paper, we introduced a fast and simple  explainability method for point cloud data. 
Such measures should not be sensitive to small sampling perturbations. Some local smoothness is required along with sufficient variance also in less important regions. Our analysis shows that the Max-Pooling bottleneck, common in graph networks, induces distinct gradient features. The gradients have extreme values within some highly distinct regions and are close to zero elsewhere. Thus gradients yield non-smooth data on one hand and flat regions on the other hand. The performance of gradient-based importance methods is hence degraded. Perturbation-based methods, such as \cite{point_lime}, perform better, but are highly intensive computationally (about 6 orders of magnitude slower than our method). We suggest to use pre-bottleneck data (before Max-Pooling) and specifically show that the $L^1$ norm of the features (per-point) is a high quality importance indicator. SOTA results are achieved based on this measure.
We show how this information can be used to analyze network performance, including invariance analysis, effects of augmentation and self-supervised learning and the ability to handle outliers and data shifts.



\bibliographystyle{splncs04}
\bibliography{main}

\begin{thebibliography}{10}
\providecommand{\url}[1]{\texttt{#1}}
\providecommand{\urlprefix}{URL }
\providecommand{\doi}[1]{https://doi.org/#1}

\bibitem{cross_point}
Afham, M., Dissanayake, I., Dissanayake, D., Dharmasiri, A., Thilakarathna, K., Rodrigo, R.: Crosspoint: Self-supervised cross-modal contrastive learning for 3d point cloud understanding. In: Proceedings of the IEEE/CVF Conference on Computer Vision and Pattern Recognition. pp. 9902--9912 (2022)

\bibitem{lrp}
Bach, S., Binder, A., Montavon, G., Klauschen, F., M{\"u}ller, K.R., Samek, W.: On pixel-wise explanations for non-linear classifier decisions by layer-wise relevance propagation. PloS one  \textbf{10}(7),  e0130140 (2015)

\bibitem{dino_vit}
Caron, M., Touvron, H., Misra, I., J{\'e}gou, H., Mairal, J., Bojanowski, P., Joulin, A.: Emerging properties in self-supervised vision transformers. In: Proceedings of the IEEE/CVF international conference on computer vision. pp. 9650--9660 (2021)

\bibitem{agnostophobia}
Dhamija, A.R., G{\"u}nther, M., Boult, T.: Reducing network agnostophobia. Advances in Neural Information Processing Systems  \textbf{31} (2018)

\bibitem{vit}
Dosovitskiy, A., Beyer, L., Kolesnikov, A., Weissenborn, D., Zhai, X., Unterthiner, T., Dehghani, M., Minderer, M., Heigold, G., Gelly, S., et~al.: An image is worth 16x16 words: Transformers for image recognition at scale. arXiv preprint arXiv:2010.11929  (2020)

\bibitem{shortcuts}
Geirhos, R., Jacobsen, J.H., Michaelis, C., Zemel, R., Brendel, W., Bethge, M., Wichmann, F.A.: Shortcut learning in deep neural networks. Nature Machine Intelligence  \textbf{2}(11),  665--673 (2020)

\bibitem{pct}
Guo, M.H., Cai, J.X., Liu, Z.N., Mu, T.J., Martin, R.R., Hu, S.M.: Pct: Point cloud transformer. Computational Visual Media  \textbf{7},  187--199 (2021)

\bibitem{cues}
Hendrycks, D., Zhao, K., Basart, S., Steinhardt, J., Song, D.: Natural adversarial examples. In: Proceedings of the IEEE/CVF Conference on Computer Vision and Pattern Recognition. pp. 15262--15271 (2021)

\bibitem{pointwolf}
Kim, S., Lee, S., Hwang, D., Lee, J., Hwang, S.J., Kim, H.J.: Point cloud augmentation with weighted local transformations. In: Proceedings of the IEEE/CVF International Conference on Computer Vision. pp. 548--557 (2021)

\bibitem{rsmix}
Lee, D., Lee, J., Lee, J., Lee, H., Lee, M., Woo, S., Lee, S.: Regularization strategy for point cloud via rigidly mixed sample. In: Proceedings of the IEEE/CVF Conference on Computer Vision and Pattern Recognition. pp. 15900--15909 (2021)

\bibitem{epic}
Levi, M.Y., Gilboa, G.: Epic: Ensemble of partial point clouds for robust classification. In: Proceedings of the IEEE/CVF International Conference on Computer Vision (ICCV). pp. 14475--14484 (October 2023)

\bibitem{pointguard}
Liu, H., Jia, J., Gong, N.Z.: Pointguard: Provably robust 3d point cloud classification. In: Proceedings of the IEEE/CVF conference on Computer Vision and Pattern Recognition. pp. 6186--6195 (2021)

\bibitem{point_mlp}
Ma, X., Qin, C., You, H., Ran, H., Fu, Y.: Rethinking network design and local geometry in point cloud: A simple residual mlp framework. arXiv preprint arXiv:2202.07123  (2022)

\bibitem{cloud_walker}
Mesika, A., Ben-Shabat, Y., Tal, A.: Cloudwalker: Random walks for 3d point cloud shape analysis. Computers \& Graphics  \textbf{106},  110--118 (2022)

\bibitem{pointnet}
Qi, C.R., Su, H., Mo, K., Guibas, L.J.: Pointnet: Deep learning on point sets for 3d classification and segmentation. IEEE Conference on Computer Vision and Pattern Recognition pp. 77--85 (2017)

\bibitem{pointnet++}
Qi, C.R., Yi, L., Su, H., Guibas, L.J.: Pointnet++: Deep hierarchical feature learning on point sets in a metric space. Advances in neural information processing systems  \textbf{30} (2017)

\bibitem{pointcleannet}
Rakotosaona, M.J., La~Barbera, V., Guerrero, P., Mitra, N.J., Ovsjanikov, M.: Pointcleannet: Learning to denoise and remove outliers from dense point clouds. In: Computer graphics forum. vol.~39, pp. 185--203. Wiley Online Library (2020)

\bibitem{modelnet_c}
Ren, J., Pan, L., Liu, Z.: Benchmarking and analyzing point cloud classification under corruptions. In: International Conference on Machine Learning. pp. 18559--18575. PMLR (2022)

\bibitem{lime}
Ribeiro, M.T., Singh, S., Guestrin, C.: "why should i trust you?" explaining the predictions of any classifier. In: Proceedings of the 22nd ACM SIGKDD international conference on knowledge discovery and data mining. pp. 1135--1144 (2016)

\bibitem{imagenet}
Russakovsky, O., Deng, J., Su, H., Krause, J., Satheesh, S., Ma, S., Huang, Z., Karpathy, A., Khosla, A., Bernstein, M., et~al.: Imagenet large scale visual recognition challenge. International journal of computer vision  \textbf{115},  211--252 (2015)

\bibitem{gradcam}
Selvaraju, R.R., Cogswell, M., Das, A., Vedantam, R., Parikh, D., Batra, D.: Grad-cam: Visual explanations from deep networks via gradient-based localization. In: Proceedings of the IEEE international conference on computer vision. pp. 618--626 (2017)

\bibitem{axiomatic}
Sundararajan, Mukund, Taly, A., Yan, Q.: Axiomatic attribution for deep networks. International conference on machine learning. PMLR  (2017)

\bibitem{point_lime}
Tan, H., Kotthaus, H.: Surrogate model-based explainability methods for point cloud nns. In: Proceedings of the IEEE/CVF Winter Conference on Applications of Computer Vision. pp. 2239--2248 (2022)

\bibitem{scanobjectnn}
Uy, M.A., Pham, Q.H., Hua, B.S., Nguyen, T., Yeung, S.K.: Revisiting point cloud classification: A new benchmark dataset and classification model on real-world data. In: Proceedings of the IEEE/CVF international conference on computer vision. pp. 1588--1597 (2019)

\bibitem{occo}
Wang, H., Liu, Q., Yue, X., Lasenby, J., Kusner, M.J.: Unsupervised point cloud pre-training via occlusion completion. In: Proceedings of the IEEE/CVF international conference on computer vision. pp. 9782--9792 (2021)

\bibitem{dgcnn}
Wang, Y., Sun, Y., Liu, Z., Sarma, S.E., Bronstein, M.M., Solomon, J.M.: Dynamic graph cnn for learning on point clouds (2019), aCM Transactions on Graphics, 38(5):146:1–146:12

\bibitem{modelnet40}
Wu, Z., Song, S., Khosla, A., Yu, F., Zhang, L., Tang, X., Xiao, J.: 3d shapenets: A deep representation for volumetric shapes. In: Proceedings of the IEEE conference on computer vision and pattern recognition. pp. 1912--1920 (2015)

\bibitem{curvenet}
Xiang, T., Zhang, C., Song, Y., Yu, J., Cai, W.: Walk in the cloud: Learning curves for point clouds shape analysis. In: Proceedings of the IEEE/CVF International Conference on Computer Vision. pp. 915--924 (2021)

\bibitem{paconv}
Xu, M., Ding, R., Zhao, H., Qi, X.: Paconv: Position adaptive convolution with dynamic kernel assembling on point clouds. In: Proceedings of the IEEE/CVF Conference on Computer Vision and Pattern Recognition. pp. 3173--3182 (2021)

\bibitem{gdanet}
Xu, M., Zhang, J., Zhou, Z., Xu, M., Qi, X., Qiao, Y.: Learning geometry-disentangled representation for complementary understanding of 3d object point cloud. In: Proceedings of the AAAI conference on artificial intelligence. vol.~35, pp. 3056--3064 (2021)

\bibitem{point_asnl}
Yan, X., Zheng, C., Li, Z., Wang, S., Cui, S.: Pointasnl: Robust point clouds processing using nonlocal neural networks with adaptive sampling. In: Proceedings of the IEEE/CVF conference on computer vision and pattern recognition. pp. 5589--5598 (2020)

\bibitem{pointbert}
Yu, X., Tang, L., Rao, Y., Huang, T., Zhou, J., Lu, J.: Point-bert: Pre-training 3d point cloud transformers with masked point modeling. In: Proceedings of the IEEE/CVF Conference on Computer Vision and Pattern Recognition. pp. 19313--19322 (2022)

\bibitem{point_cert}
Zhang, J., Jia, J., Liu, H., Gong, N.Z.: Pointcert: Point cloud classification with deterministic certified robustness guarantees. In: Proceedings of the IEEE/CVF Conference on Computer Vision and Pattern Recognition. pp. 9496--9505 (2023)

\bibitem{point_hop}
Zhang, M., You, H., Kadam, P., Liu, S., Kuo, C.C.J.: Pointhop: An explainable machine learning method for point cloud classification. IEEE Transactions on Multimedia  \textbf{22}(7),  1744--1755 (2020)

\bibitem{lgrnet}
Zhao, C., Yang, J., Xiong, X., Zhu, A., Cao, Z., Li, X.: Rotation invariant point cloud analysis: Where local geometry meets global topology. Pattern Recognition  \textbf{127},  108626 (2022)

\bibitem{saliency_maps}
Zheng, T., Chen, C., Yuan, J., Li, B., Ren, K.: Pointcloud saliency maps. In: Proceedings of the IEEE/CVF International Conference on Computer Vision. pp. 1598--1606 (2019)

\end{thebibliography}
\pagebreak
\title{Supplementary Materials - Fast and Simple Explainability for Point Cloud Networks} 

\titlerunning{Fast and Simple
Explainability for Point Cloud Networks (Supp)}

\author{Meir Yossef Levi\inst{1} \and
Guy Gilboa\inst{1}}


\institute{Technion - Israel Institute of Technology, Haifa, Israel \\
\email{me.levi@campus.technion.ac.il}\\
\email{guy.gilboa@ee.technion.ac.il}}

\maketitle


\begin{lemma}[Auxiliary]
Assume a K-nearest-neighbors (KNN) graph constructed by $X$ is a connected graph, and assume that $S_c, \bar{S}_c \notin \emptyset$, then $\exists i,j: CP(X_i) \neq CP(X_j)$ and $X_j \in \text{KNN}(X_i)$.
\label{lemma:Auxiliary}
\end{lemma}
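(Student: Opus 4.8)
The plan is to argue by contradiction using the elementary fact that a connected graph cannot be split by a nontrivial vertex partition without a crossing edge. Concretely, I would assume the negation of the claim: that for \emph{every} pair $i,j$ with $X_j \in \text{KNN}(X_i)$ one has $CP(X_i) = CP(X_j)$, i.e. no edge of the graph ever joins a critical point to a non-critical one, and then derive a contradiction with connectivity.

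First I would use the hypotheses $S_c \neq \emptyset$ and $\bar{S}_c \neq \emptyset$ to fix two vertices $a \in S_c$ and $b \in \bar{S}_c$, so that $CP(X_a) = 1$ and $CP(X_b) = 0$. Since the KNN graph of $X$ is connected, there exists a path $a = v_0, v_1, \ldots, v_m = b$ in which each consecutive pair is joined by an edge, meaning $v_{t+1} \in \text{KNN}(v_t)$ or $v_t \in \text{KNN}(v_{t+1})$. I would then run a discrete intermediate-value argument along this path: because the endpoint labels $CP(v_0) = 1$ and $CP(v_m) = 0$ differ, the finite sequence $CP(v_0), \ldots, CP(v_m)$ must change value at some step, so there is an index $t$ with $CP(v_t) \neq CP(v_{t+1})$. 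This contradicts the assumption that adjacent vertices always carry equal $CP$ values, which closes the argument.

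To land the statement in the precise directed form requested, namely a pair with $X_j \in \text{KNN}(X_i)$, I would orient the offending edge explicitly: if $v_{t+1} \in \text{KNN}(v_t)$ take $(i,j) = (t, t+1)$, and otherwise take $(i,j) = (t+1, t)$. In either case $CP(X_i) \neq CP(X_j)$ and $X_j \in \text{KNN}(X_i)$, as required.

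The only genuine subtlety, and the step I would treat most carefully, is the directedness of the KNN relation: $X_j \in \text{KNN}(X_i)$ does not in general imply $X_i \in \text{KNN}(X_j)$. This affects both what "connected" should mean for a possibly directed graph and which endpoint is assigned the role of $i$. I would resolve it by reading connectivity with respect to the underlying undirected graph (an edge present whenever the KNN relation holds in either direction), which is the natural interpretation here, and then recover the directed membership via the orientation step above. Everything else is the standard cut-edge fact for connected graphs and involves no computation.
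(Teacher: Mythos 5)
Your proof is correct and follows essentially the same route as the paper's: both argue by contradiction, using connectivity to show that the nontrivial partition $(S_c, \bar{S}_c)$ must be crossed by an edge --- the paper by propagating the label $CP=1$ outward from a critical point until all vertices are labeled critical (contradicting $\bar{S}_c \neq \emptyset$), you by fixing a path from a critical to a non-critical vertex and locating the first label switch. Your explicit handling of the asymmetry of the KNN relation (reading connectivity in the underlying undirected graph and then orienting the offending edge) is a point of care that the paper's proof silently glosses over, but it does not change the substance of the argument.
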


\begin{proof}
Assume, in contradiction, that $\forall i,j: X_j \in \text{KNN}(X_i)$, $CP(X_j) = CP(X_i)$ holds. Since the graph is connected, then each $X_j$ is a k-hop neighbor of $X_i$ for some $k$. Without loss of generality, assume that $CP(X_i) = 1$ for some $i$. Then on each hop, following the assumption, $CP(X_j) = 1, \forall j:X_j \in \text{KNN}(X_i)$. Keep propagating to all nodes in the graph, and we get that $CP(X_i) = 1$ for all $i$ in the graph. Then we get that $\bar{S}_c \in \emptyset$ in contradiction to the assumption that $S_c, \bar{S}_c \notin \emptyset$.
\end{proof}

\section{Additional Visualizations}

The visualizations presented below serve as extensions to the demonstrations discussed in the paper. They aim to enhance conceptual understanding and highlight the broad applicability of the insights presented in this paper.
For convenience, we provide here the results oulined in the main paper regarding rotation-invariance and OOD accuracy (\cref{table:Results_reminder}).

\begin{table} [ptbh!]
\begin{center}
  \begin{tabular}{p{2cm} || C{2.0cm} C{2.0cm} C{2.0cm}}
  
    \hline
    Model & Rotation-invariance accuracy & Outliers accuracy & Domain shift accuracy \\
    \hline
    LGR & 91.1\% & -- & --\\
    GDANet  & 78.8\% & 74.31 \% & 43.59 \%\\
    DGCNN & 78.5\% & 70.50 \% & 38.46 \%\\
    RPC  & 76.8\% & 72.55 \% & 41.03 \%\\
    PointNet  & 59.1\% & -- & --\\
    \hline
  \end{tabular}
\end{center}
\caption{\textbf{Reminder for rotation-invariance and OOD accuracy results.}}

\label{table:Results_reminder}
\end{table}

\begin{figure*}[ptbh!]
  \centering
   \includegraphics[width=0.9\linewidth, height=1.33\linewidth]{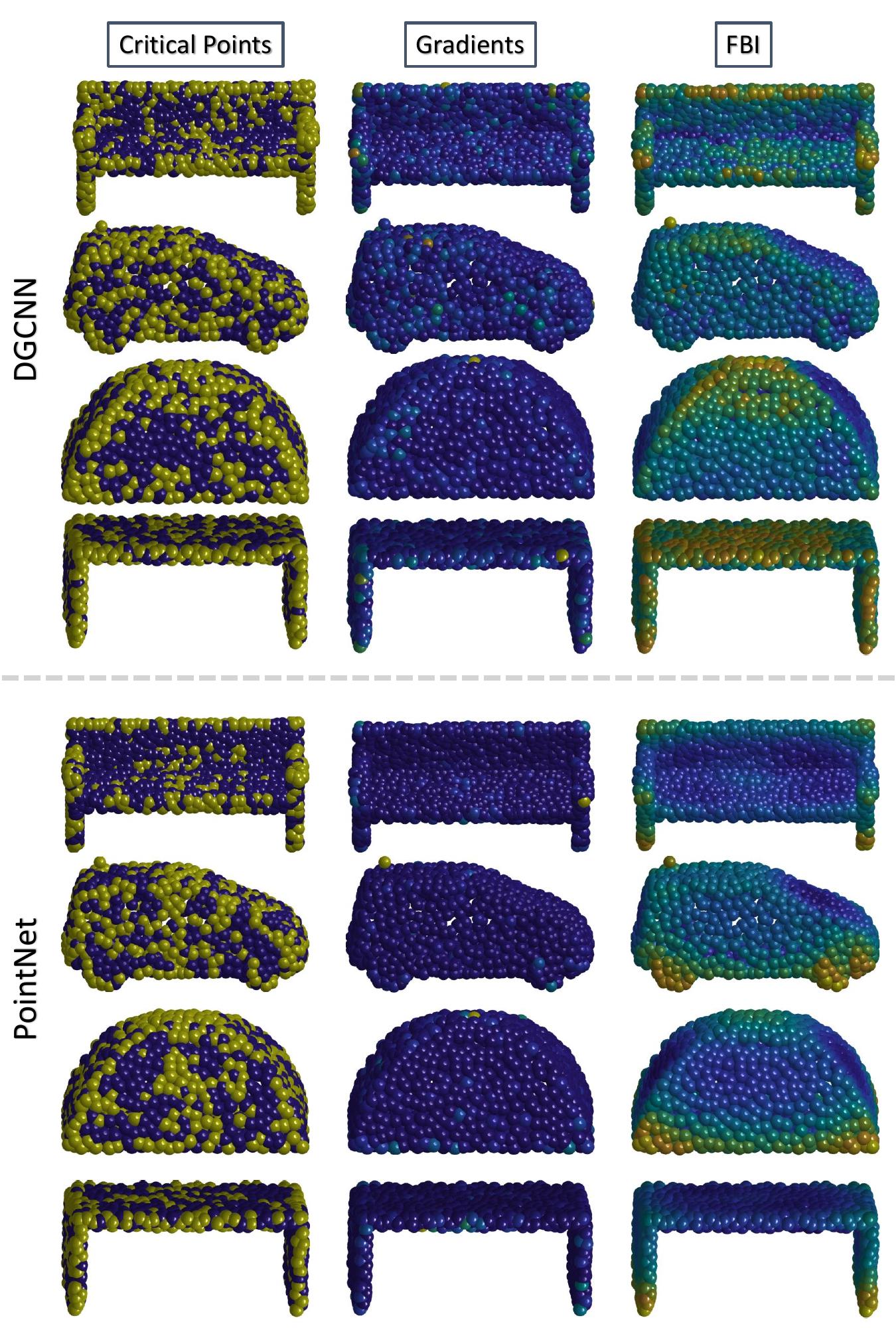}
   \caption{{\bf Critical Points,  Gradients and FBI.} Additional samples are provided to illustrate the smoothness of our method. Both critical points and gradients exhibit a substantial number of points with low or zero influence.}
   \label{fig:supp_gradients}
\end{figure*}

\begin{figure*}[ptbh!]
  \centering
   \includegraphics[width=1\linewidth, height=0.375\linewidth]{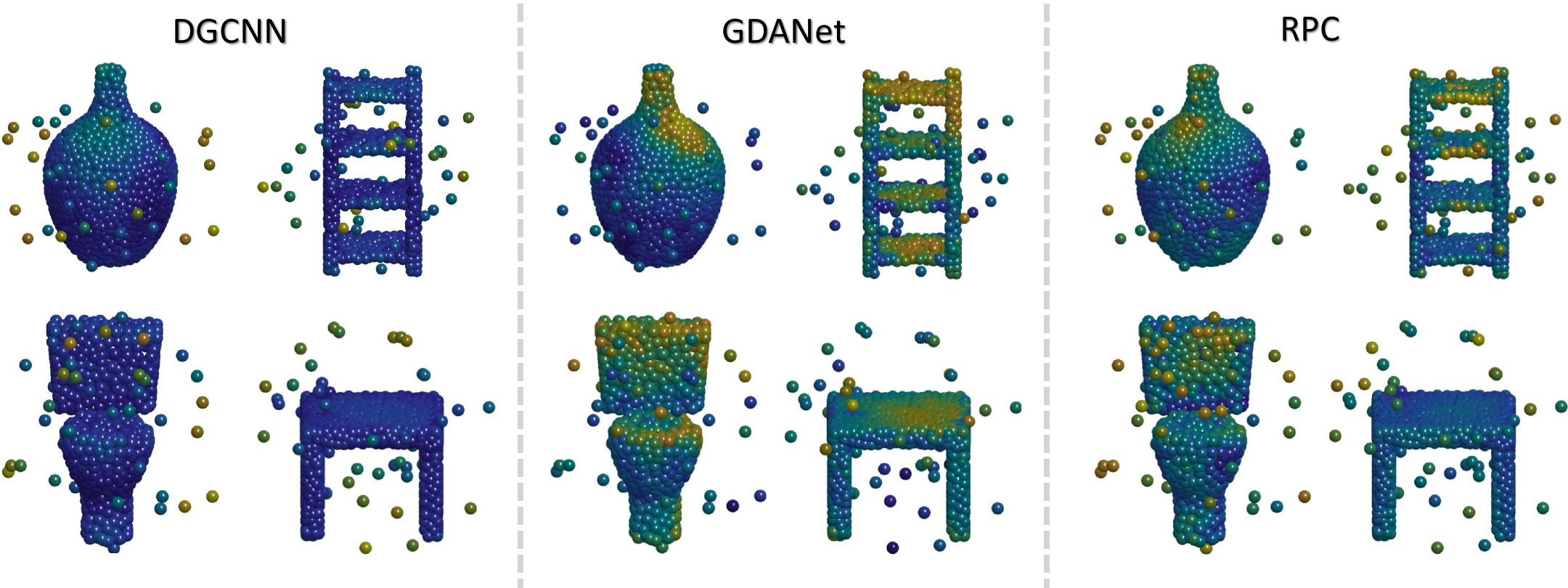}
   \caption{{\bf OOD evaluation on additional samples.} Validations of the observed trend that GDANet exhibits a higher capability in attending to semantic meaning rather than outliers, whereas DGCNN performs comparatively worse.}
   \label{fig:supp_ood}
\end{figure*}

\begin{figure*}[ptbh!]
  \centering
   \includegraphics[width = 1\linewidth, height =0.56 
   \linewidth]{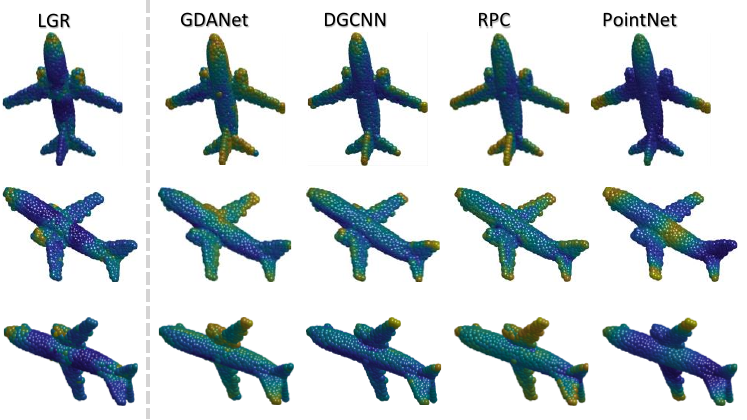}
   \caption{{\bf Rotation invariance.} Airplane example.}
   \label{fig:supp_rotation_invariant_airplane}
\end{figure*}

\begin{figure*}[ptbh!]
  \centering
   \includegraphics[width = 1\linewidth, height =0.56 
   \linewidth]{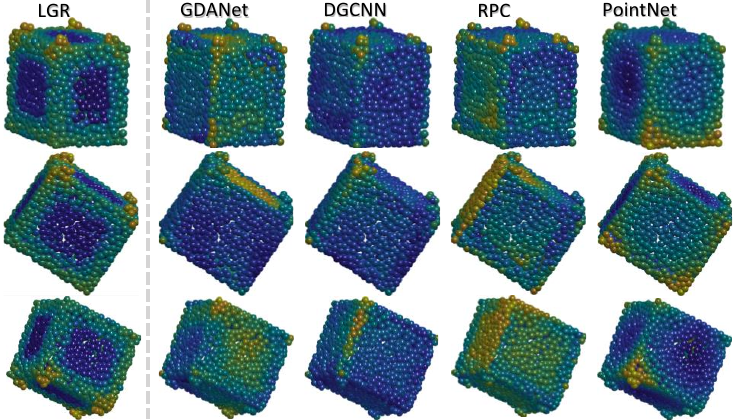}
   \caption{{\bf Rotation invariance.} TV-Stand example.}
   \label{fig:supp_rotation_invariant_box}
\end{figure*}

\begin{figure*}[ptbh!]
  \centering
   \includegraphics[width=0.85\linewidth, height=1.5107\linewidth]{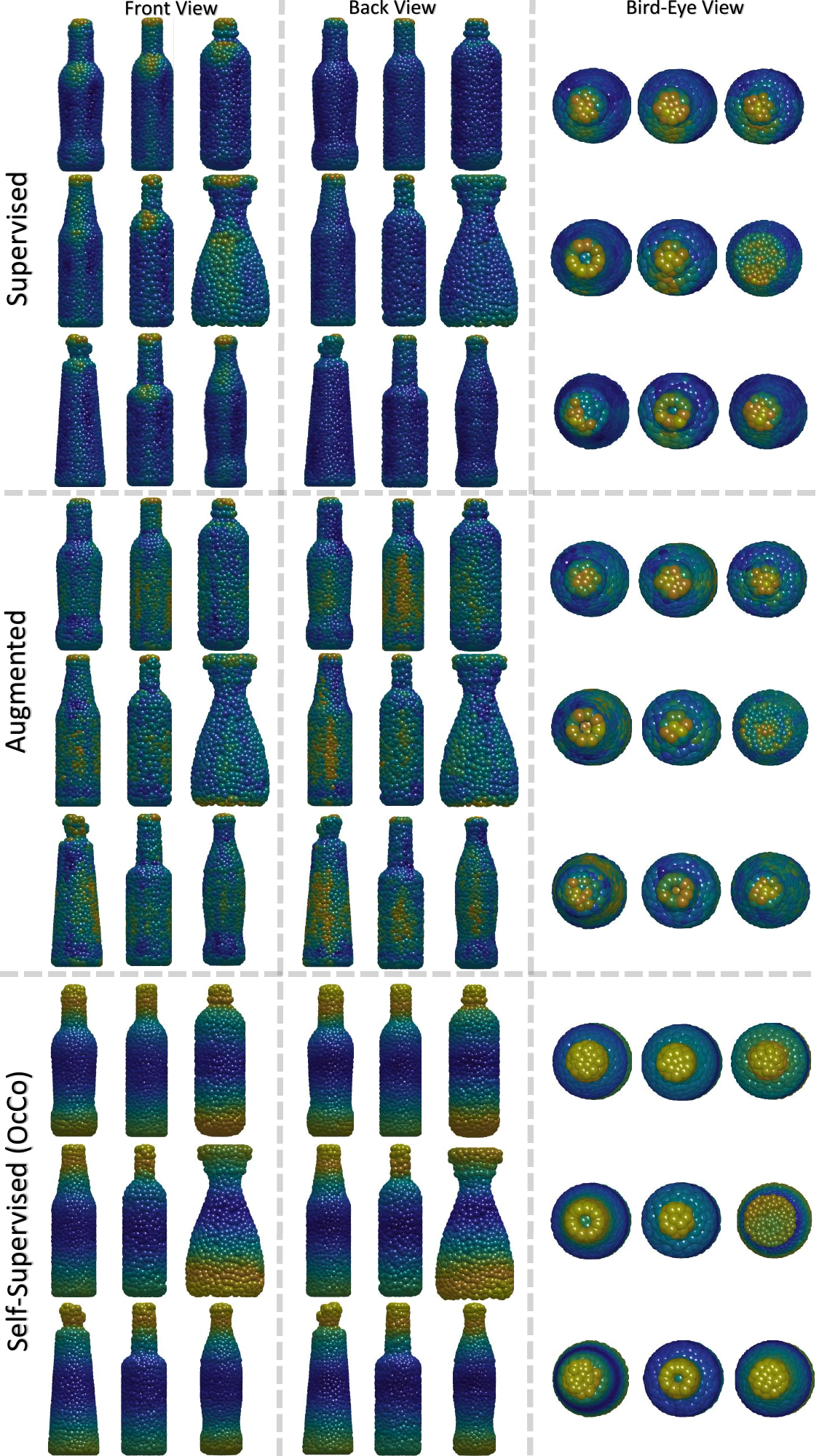}
   \caption{{\bf Emphasis on frontal aspect.} Additional examples of bottles emphasize the frontal aspect. The self-supervised method demonstrates almost perfect symmetric influence, while augmented samples fall in between.}
   \label{fig:supp_supervised_self_supervised_modelnet40}
\end{figure*}

\end{document}